\newtheorem{proposition}{Proposition}
\newtheorem{definition}{Definition}
\newcommand{\bs}{\boldsymbol}
\newcommand{\cl}{\mathcal}
\newcommand{\ie}{\emph{i.e.}, }
\newcommand{\eg}{\emph{e.g.}, }
\renewcommand{\Re}{\mathbb R}
\newcommand{\sm}[2]{
\big(\begin{smallmatrix}
  #1\\
  #2
\end{smallmatrix}\big)
}
\title{\ \\[-9mm] Multi-resolution Compressive Sensing Reconstruction\vspace{-3mm}}
\name{Adriana Gonz{\'a}lez$^1$,  Hong Jiang$^2$, Gang Huang$^2$, Laurent Jacques$^1$\vspace{-3mm}
\thanks{LJ is funded by the F.R.S.-FNRS. The authors would like to thank Patrice Rondao Alface of Nokia Bell Labs for his interest and insightful discussions.}}
\address{
\ninept $^1$ICTEAM/ELEN, ISPGroup, UCL, Belgium.
\ninept $^2$ Bell Labs, NJ, USA.\vspace{-3mm}
}
\newcommand{\sq}{\vspace{-2mm}}
\newcommand{\sqm}{\vspace{-.7mm}}
\begin{document}
%
\maketitle
\begin{abstract}
We consider the problem of reconstructing an image from compressive measurements using a multi-resolution grid. In this context, the reconstructed image is divided into multiple regions, each one with a different resolution. This problem arises in situations where the image to reconstruct contains a certain region of interest (RoI) that is more important than the rest. 
Through a theoretical analysis and simulation experiments we show that the multi-resolution reconstruction provides a higher quality of the RoI compared to the traditional single-resolution approach. 
\end{abstract}
\begin{keywords}
Compressive sensing, reconstruction, multi-resolution, non-uniform grid
\end{keywords}

\sq
\section{Introduction}
\label{sec:intro}
\sq

Compressive sensing (CS)~\cite{CRT2005, Donoho, candes2006} is a
powerful technique for efficiently acquiring images and
videos~\cite{romberg2008, Li, jiang2012} at a sampling rate depending
on their intrinsic complexity (\eg their sparsity). For instance, in
image acquisition \cite{lustig2008, takhar2006, huang2013}, an
informative image can thus be captured in terms of compressive
measurements, \ie using far fewer measurements than the number of
pixels in the image. This makes the acquisition more cost-effective
and less bandwidth-demanding. Efficient recovery algorithms have been developed to reconstruct an image from a set of compressive measurements~\cite{CRT2005, jiang2012, jalali2014, metzler2014}. These algorithms reconstruct an image from the measurements using the same resolution (number of pixels) as that of the original image that generated the measurements. 

The pixels of the original image generating the measurements usually form a uniform grid. The measurements from a compressive imaging device, such as those described in~\cite{takhar2006, huang2013}, are acquired using a mask consisting of an array of programmable elements where each element defines a pixel of the image and the size of the elements is the same. However, oftentimes, an image has a region of interest (RoI) that contains more details than the rest of the image. In those cases, it is desirable to have a higher resolution (more pixels) in the RoI than in other regions so the details of the RoI can be better resolved. For example, in a portrait, the person's face has often more details than the background, which may be blurred. In images where a refocus is applied after acquisition~\cite{ng2005}, it would be desirable for the region of refocus to have a higher resolution than the rest of the image. Another example can be found in the reconstruction of an image from measurements made by a lensless compressive imaging device with multiple sensors, where the region of interest is the intersection of all the views of the multiple sensors~\cite{JHW2014}.
 
In this paper, we aim at reconstructing, from a given set of compressive measurements, a high quality version of a certain RoI inside an image. The measurements are assumed to be generated from an image with a uniform grid of pixels. Our approach consists of dividing the reconstruction grid into regions with different resolutions, reserving a higher resolution for the RoI. More specifically, in this paper, the RoI is described using a grid with the same resolution as that of the original image, and the rest of the image is described using a lower resolution grid. As an example, we consider the image shown in Fig.~\ref{fig01}-(left). We identify the RoI as the region containing the butterfly on the right hand side of the image. Outside of the RoI, the pixels are relatively constant and can be well represented in a lower resolution than that in the RoI. The pixel distribution in the multi-resolution reconstruction is illustrated in Fig.~\ref{fig01}-(right).

 \begin{figure}[ht]
 	\centering
	\begin{tabular}{c c}
	\vspace{-1cm} & \\
 	\hspace{-0.4cm} \includegraphics[height=2.5cm]{./figures/Fig2_GT_Butterfly_x} \hspace{-1.2cm} &
 	\includegraphics[height=2.67cm]{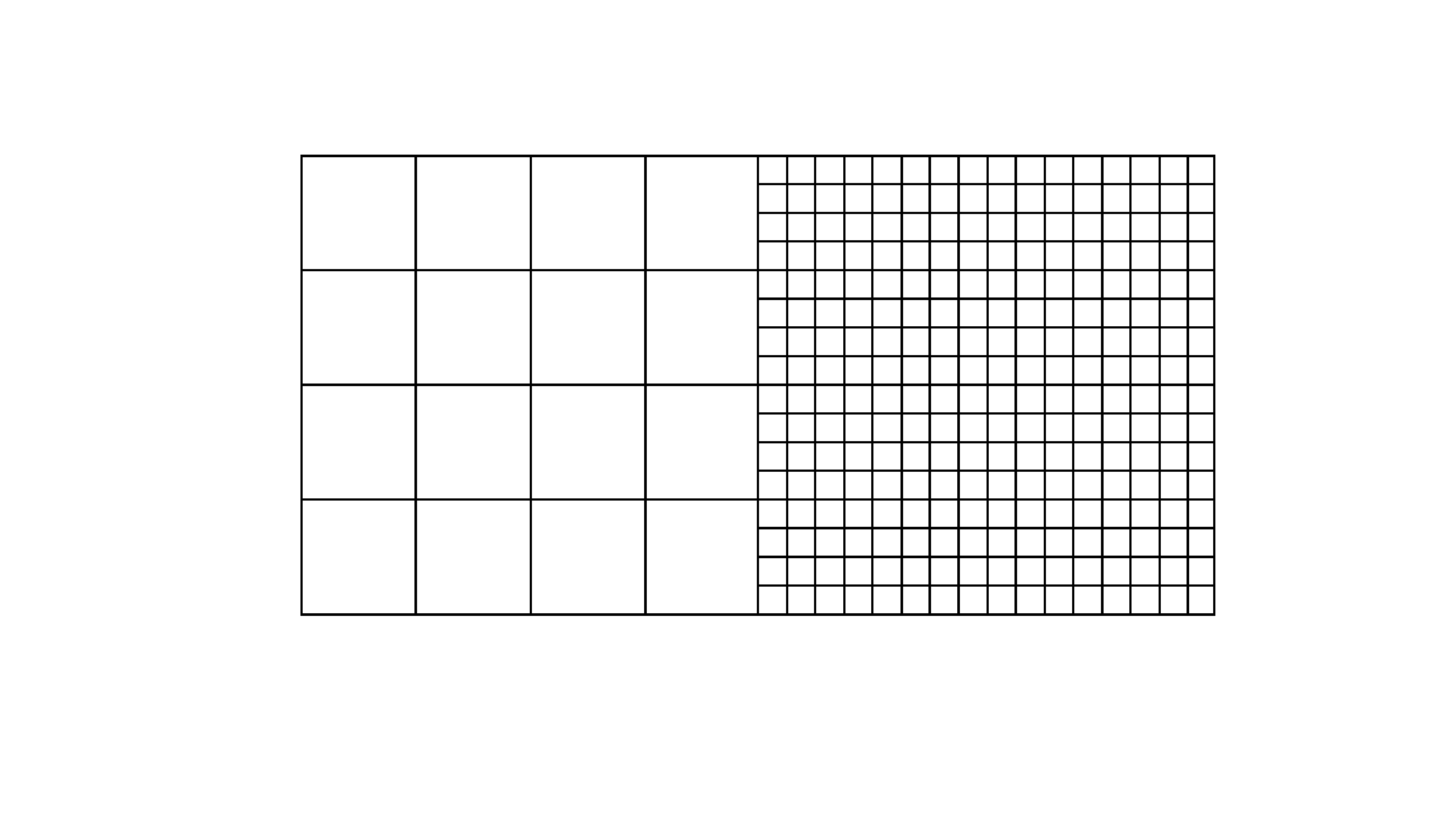} \\
	\vspace{-1cm} & \\
	\end{tabular}
 	\caption{An image with region of interest (RoI). Left: Original image. Right: Illustration of a multi-resolution grid with lower resolution outside of the RoI.\\[-1cm]}
 	\label{fig01}
 \end{figure}
\vspace{0.5cm}

The motivation for this approach is largely based on the CS theory stating that, for a given image, the quality of the reconstructed image depends on the ratio of the number of measurements over the number of pixels. In general, a larger ratio implies a better quality of the reconstructed image. Therefore, the multi-resolution approach aims at reducing the total number of pixels in the reconstructed image by decreasing the resolution outside of the RoI. This effectively increases the number of measurements as compared to the total number of pixels. Consequently, we expect an improved quality of the RoI in the reconstructed image. 

\sq
\subsection{Main contribution of this paper}
\sq

We formulate the multi-resolution reconstruction as a constrained minimization problem in which the unknowns are the (non-uniform) pixels of the multi-resolution image to be reconstructed. The constraint is determined by the available measurements. Since the measurements are generated from an image described in a uniform grid, an upsampling of the lower resolution regions is necessary before the sensing matrix is applied. 

On one hand, since the number of unknowns is reduced as compared to the single-resolution reconstruction, while the number of equations remains the same, it is expected that the multi-resolution approach would provide an RoI of higher quality. On the other hand, the upsampling operation may introduce an error into the optimization problem, potentially affecting the reconstruction quality. What is the overall quality of the multi-resolution reconstruction? Is there a gain over the traditional single-resolution reconstruction? 

These questions are addressed in the following by a theoretical analysis and simulation experiments. A theoretical bound is also provided on the error of the multi-resolution reconstruction, which sheds a light on the conditions under which the multi-resolution approach is more advantageous. 

\sq
\subsection{Related work}
\sq
Multi-resolution for CS have been studied previously in~\cite{jiang2010, jiangBLTJ2012, goldstein2013, zhu2014, wang2015}. These works have focused on the reconstruction of an image using a uniform grid with a different resolution than the one used to generate the measurements. This is useful, for example, in a broadcast system where the same measurements may be used by different receivers to reconstruct images of different resolutions that fit their own needs. In contrast to these works, where the multi-resolution schemes use a uniform grid for the reconstruction, the multi-resolution approach proposed in this paper refers to the reconstruction using pixels distributed in a non-uniform grid: different regions in the image may have a different resolution.   

\sq
\subsection{Organization of the paper}
\sq
In Section~\ref{sec:MR}, we present the mathematical formulation of the multi-resolution reconstruction problem and describe the main result on the error bound of the multi-resolution reconstruction. In Section~\ref{sec:simulations}, simulation results are presented in order to support the theoretical analysis. We end the paper with a conclusion in Section~\ref{sec:conc}.

\sq
\section{Theoretical analysis}
\label{sec:MR}
\sq

In this section, we formally formulate the problem and provide a theoretical analysis for the multi-resolution reconstruction. Although the same idea and analysis applies to multiple regions with more than two resolutions, we limit our discussion here to two regions where the RoI has the same resolution as the original image, and the rest of the image has a lower resolution.

\sq
\subsection{Problem formulation}
\sq 

For the sake of simplicity, we consider hereafter that the image and
its restrictions to both the RoI and to its complement are sparse in
the canonical (pixel) basis. Consider the compressive sensing
reconstruction problem
\sqm
\begin{equation}
\mathop {\min }\limits_{\bs x \in {\Re ^N}} { \| \bs x \|_1 } \ {\rm s. t.} \ \left\| {\bs y - \bs A \bs x} \right\|_2 \le \varepsilon,\sqm
\label{eq:min1}
\end{equation}
 where $\bs A \in {\Re^{m \times N}}$ is a sensing matrix, $\bs y \in {\Re^m}$ is a measurement vector, and $\varepsilon\geq0$ is a bound of the noise in the measurements. The resolution of a solution $\widetilde{\bs x}$ provided by~\eqref{eq:min1} is $N$.

Let us consider that the image $\bs x$ is divided into two disjoint
regions. Formally, with $[N] :=\{1,\cdots, N\}$, we consider $\cl
J^1 \subset [N]$ and $\cl J^2 := [N] \setminus \cl J^2$, with $|\cl
J^1|=N_1$ and $|\cl J^2|=N_2=N-N_1$. The
vector $\bs x$ is then divided into two parts $\bs x_1$ and $\bs x_2$ with supports being ${\cl J^1}$
and ${\cl J^2}$, respectively, \ie 
\sqm
\begin{equation*}
	\bs x = \bs x_1 + \bs x_2,\quad \
	{\rm supp}\,\bs x_1 = \cl J^1,\ {\rm supp}\,\bs x_2 = \cl J^2. \sqm
\end{equation*}
With a slight abuse of notation, depending on the context, ${\bs x_1}$ and ${\bs x_2}$ will
either denote vectors living in ${\Re^{{N_1}}}$ and in ${\Re^{{N_2}}}$,
respectively, or vectors living in $\Re^N$ and equal to zero outside
of their supports. Up to a reordering of the columns of $\bs A$, we
can always form
\sqm
\begin{equation*}
  \bs A = \left[{\bs A_1},{\bs A_2}\right] \in {\Re ^{m \times ({N_1}
      + {N_2})}},\ \bs x = \sm{\bs x_1}{\bs x_2}\in {\Re ^{({N_1} + {N_2})}}. \sqm	
\end{equation*}
We consider the region of interest (RoI) to be the area in the image
$\bs x$ corresponding to the indices in ${\cl J^1}$. Hereafter, the
resolution of the RoI is assumed to be the same as the one of the
original image, \ie $N_1$, and the rest of the image, corresponding to
${\cl J^2}$, has a lower resolution denoted ${N_L}$, with  
\sqm
\begin{equation*} 
N_L< {N_2}. \sqm
\end{equation*} 

Let $\bs E \in {\Re ^{{N_2} \times {N_L}}}$ be an expansion matrix
\cite{jiangBLTJ2012} mapping the image outside of the RoI from the
lower resolution grid in $\Re^{N_L}$ to the original resolution grid
in $\Re^{N_2}$. The expansion matrix $\bs E$ could, for example,
implement an interpolation of pixels. We define the multi-resolution
matrix ${\bs A_E}$ as 
\sqm
\begin{equation}
\begin{array}{l}
{\bs A_E} = [{\bs A_1},{\bs A_2} \bs E] \in {\Re ^{m \times N'}}, \ N'={N_1} + {N_L},\\[1mm]
{\bs A_E}\sm{\bs x_1}{\bs x_L} := {\bs A_1}{\bs x_1} + {\bs A_2}\bs E{\bs x_L}.
\end{array}\sqm
\label{eq:AE}
\end{equation}
The multi-resolution reconstruction problem can be formulated as:
\sqm
\begin{equation}
\mathop {\min }\limits_{\scriptstyle{\bs x_1} \in {\Re ^{{N_1}}}\hfill\atop
	\scriptstyle{\bs x_L} \in {\Re ^{{N_L}}}\hfill} { \| \sm{\bs
          x_1}{\bs x_L} \|_1 }\ {\rm s. t.} \ \left\| {\bs y - \bs A_E \sm{\bs
          x_1}{\bs x_L}} \right\|_2 \le \varepsilon. \sqm
\label{eq:min4}
\end{equation}
Problem~\eqref{eq:min4} is multi-resolution because it is optimized over regions ${\bs x_1}$ and ${\bs x_L}$ that have different resolutions. The question is how the solutions of \eqref{eq:min1} and \eqref{eq:min4} are related.

\sq
\subsection{Analysis}
\sq

We now establish a key property of the solution of \eqref{eq:min4}. 

\begin{definition}
\label{def:def1}

We say that $\bs A$ admits robust $k{\textrm{-sparse}}$ solutions if
there is a constant $C>0$ such that, for any $k$-sparse vector ${\bs
  x^*}$ with $\bs y = \bs A \bs x^* + \bs e$ where ${\left\| \bs e
  \right\|_2} \le \varepsilon$, every solution $\widetilde{\bs x}$
of~\eqref{eq:min1} satisfies the following estimate
\sqm
\begin{equation*}
	{\left\| {\widetilde{\bs x} - \bs x^*} \right\|_2} \le C\varepsilon. \sqm
\end{equation*}
\end{definition}

In~\cite{CRT2005,candes2006}, we can find sufficient conditions on
$\bs A$ such that~\eqref{eq:min1} admits robust solutions. For
example, if the entries of $\bs A$ are Gaussian random variables $\cl
N(0,1/m)$, then $\bs A$ admits robust $k{\textrm{-sparse}}$ solutions
with overwhelming probability if 
\sqm
\begin{equation}
m \geq m^*(k,N):=C \cdot k \cdot \log(N). \sqm
\label{eq:RIP}
\end{equation}

\sqm
\begin{proposition} 
\label{th:th2}
Let $\bs x^* = \bs x_1^* + \bs x_2^*$ be a $k{\textrm{-sparse}}$
vector and $\bs y = \bs A \bs x^*$. 
If the matrix $\bs A_E$ of~\eqref{eq:AE} admits robust $k{\textrm{-sparse}}$ solutions, then any  solution $\sm{\widetilde{\bs x}_1}{\widetilde{\bs x}_L}$ to
problem~\eqref{eq:min4} with $\varepsilon$ set to 
\sqm
\begin{equation*}
	\varepsilon_L  := \mathop {\min }\limits_{\bs x_L} \left\{ {{{\left\| {\bs A_2\left( {\bs E \bs x_L - \bs x_2^*} \right)} \right\|}_2}\;\left| {\;{{\left\| \bs x_L \right\|}_0} \le {{\left\| \bs x_2^* \right\|}_0}} \right.} \right\}\sqm
\end{equation*}
\sqm
satisfies
\sqm
\begin{equation}
	\left\| \widetilde{\bs x}_1 - \bs x_1^* \right\|_2 \le C \varepsilon_L. \sqm
\label{eq:15}
\end{equation}
\end{proposition}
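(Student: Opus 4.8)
The plan is to reduce Proposition~\ref{th:th2} to the robustness hypothesis on $\bs A_E$ by constructing a single $k$-sparse vector in $\Re^{N'}$ that the true signal nearly explains through $\bs A_E$, and then invoking Definition~\ref{def:def1} verbatim. The natural candidate is $\sm{\bs x_1^*}{\bs x_L^\star}$, where $\bs x_L^\star$ is a minimizer realizing $\varepsilon_L$, i.e. $\bs x_L^\star \in \argmin_{\bs x_L}\{\|\bs A_2(\bs E\bs x_L - \bs x_2^*)\|_2 : \|\bs x_L\|_0 \le \|\bs x_2^*\|_0\}$. First I would check the sparsity budget: since $\|\bs x^*\|_0 = \|\bs x_1^*\|_0 + \|\bs x_2^*\|_0 = k$ and $\|\bs x_L^\star\|_0 \le \|\bs x_2^*\|_0$ by the constraint in the definition of $\varepsilon_L$, the concatenation $\sm{\bs x_1^*}{\bs x_L^\star}$ has at most $k$ nonzeros, so it is a legitimate $k$-sparse vector in $\Re^{N'}$.

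Next I would set up the measurement model as seen by $\bs A_E$. We have $\bs y = \bs A\bs x^* = \bs A_1\bs x_1^* + \bs A_2\bs x_2^*$, and I want to write this as $\bs y = \bs A_E\sm{\bs x_1^*}{\bs x_L^\star} + \bs e$ for a suitable error $\bs e$. Using~\eqref{eq:AE},
\[
\bs A_E\sm{\bs x_1^*}{\bs x_L^\star} = \bs A_1\bs x_1^* + \bs A_2\bs E\bs x_L^\star,
\]
so $\bs e = \bs y - \bs A_E\sm{\bs x_1^*}{\bs x_L^\star} = \bs A_2(\bs x_2^* - \bs E\bs x_L^\star)$, whence $\|\bs e\|_2 = \|\bs A_2(\bs E\bs x_L^\star - \bs x_2^*)\|_2 = \varepsilon_L$ by the choice of $\bs x_L^\star$. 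So the $k$-sparse vector $\sm{\bs x_1^*}{\bs x_L^\star}$ is consistent with $\bs y$ through $\bs A_E$ up to an $\ell_2$ residual of exactly $\varepsilon_L$, which is precisely the noise level with which problem~\eqref{eq:min4} is being solved.

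Now I would apply the robustness hypothesis. Since $\bs A_E$ admits robust $k$-sparse solutions with constant $C$, and $\sm{\widetilde{\bs x}_1}{\widetilde{\bs x}_L}$ solves~\eqref{eq:min4} with $\varepsilon=\varepsilon_L$, Definition~\ref{def:def1} applied to the $k$-sparse vector $\sm{\bs x_1^*}{\bs x_L^\star}$ and error $\bs e$ (with $\|\bs e\|_2 = \varepsilon_L$) gives
\[
\Bigl\| \sm{\widetilde{\bs x}_1}{\widetilde{\bs x}_L} - \sm{\bs x_1^*}{\bs x_L^\star} \Bigr\|_2 \le C\,\varepsilon_L.
\]
Finally, restricting to the RoI block and using that deleting coordinates does not increase the $\ell_2$ norm, $\|\widetilde{\bs x}_1 - \bs x_1^*\|_2 \le \bigl\| \sm{\widetilde{\bs x}_1 - \bs x_1^*}{\widetilde{\bs x}_L - \bs x_L^\star} \bigr\|_2 \le C\varepsilon_L$, which is~\eqref{eq:15}. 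The one genuinely delicate point is the feasibility check in~\eqref{eq:min4}: I must make sure $\sm{\widetilde{\bs x}_1}{\widetilde{\bs x}_L}$ being a \emph{solution} of~\eqref{eq:min4} is consistent with the constraint set being nonempty at level $\varepsilon_L$ — but the computation above shows $\sm{\bs x_1^*}{\bs x_L^\star}$ is itself feasible, so the minimization is well posed and Definition~\ref{def:def1} applies directly. I expect the only place where care is needed is bookkeeping the two interpretations of $\bs x_2$ (as a vector in $\Re^{N_2}$ versus in $\Re^N$) so that $\bs A_2(\bs E\bs x_L^\star - \bs x_2^*)$ and $\bs A\bs x^*$ refer to compatible objects; everything else is a direct substitution.
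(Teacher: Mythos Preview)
Your proposal is correct and follows essentially the same approach as the paper: define the minimizer $\bs x_L^\star$ realizing $\varepsilon_L$, verify that $\sm{\bs x_1^*}{\bs x_L^\star}$ is $k$-sparse and feasible for~\eqref{eq:min4} with residual exactly $\varepsilon_L$, invoke Definition~\ref{def:def1} for $\bs A_E$, and then restrict to the RoI block. Your write-up is in fact slightly more explicit than the paper's (spelling out $\|\bs x_1^*\|_0+\|\bs x_2^*\|_0=k$ and the feasibility/well-posedness point), but the argument is the same.
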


Notice that $\varepsilon_L$ represents the error achieved by the best $\|\bs x_2^*\|_0$-sparse approximation of $\bs x_2^*$ through the expansion operator $\bs E$, as measured in the domain projected by $\bs A_2$.
\begin{proof} Let us define
\sqm
\begin{equation*}
	\bs x_L^* := \mathop {\arg \min }\limits_{\|\bs x_L\|_0 \le \|\bs x_2^*\|_0} {\left\| {\bs A_2 \left( \bs E \bs x_L - \bs x_2^* \right)} \right\|_2}\sqm
\end{equation*}
and $\bs e  := \bs A_2 \left( \bs x_2^* - \bs E \bs x_L^*
\right)$ with $\|\bs e\|_2 = \varepsilon_L$ by construction. 

The vector $\sm{\bs x_1^*}{\bs x_L^*}$ is $k$-sparse since
$\|\bs x_L^*\|_0 \le \|\bs x_2^*\|_0$ and 
\sqm
\begin{equation*}
	\bs A_E \sm{\bs x_1^*}{\bs x_L^*} = \bs A_1 \bs
        x_1^* + \bs A_2 \bs E \bs x_L^* = \bs y - \bs e.
\label{eq:proof2_2}\sqm
\end{equation*}
Therefore, if we set $\varepsilon = \varepsilon_L$ in
problem~\eqref{eq:min4}, $\sm{\bs x_1^*}{\bs x_L^*}$ is a feasible
$k$-sparse vector of its fidelity constraint. Consequently, if $\bs A_E$ admits robust $k$-sparse solutions, all solutions $\sm{\widetilde{\bs x}_1}{\widetilde{\bs x}_L}$ of~\eqref{eq:min4} satisfy
\sqm
\begin{equation*}
	\left\| \sm{\widetilde{\bs x}_1}{\widetilde{\bs x}_L} - \sm{\bs x_1^*}{\bs x_L^*} \right\|_2 \le C \varepsilon_L,\sqm
\end{equation*}
which leads to
\sqm
\begin{equation*}
		\left\| \widetilde{\bs x}_1 - \bs x_1^* \right\|_2 \le \left\| \sm{\widetilde{\bs x}_1}{\widetilde{\bs x}_L} - \sm{\bs x_1^*}{\bs x_L^*} \right\|_2
 		\le C \varepsilon_L. \vspace{-8mm}
\end{equation*}
\sq
\end{proof}
We note that Prop.~\ref{th:th2} only provides a bound on the solution $\widetilde{\bs x}_1$ in the RoI, but it does not provide a bound on the behavior of the low resolution portion $\widetilde{\bs x}_L$ outside of RoI. This is, however, sufficient because we are only interested in the quality of the image in the RoI.

\sq
\subsection{Discussion}
\label{sec:discussion_sec2}
\sq
Hereafter, we consider that the vector $\bs x^*$ in Prop.~\ref{th:th2} is the original image, $ \widetilde{\bs x}$ is the reconstructed image from the multi-resolution reconstruction~\eqref{eq:min4} and $\widetilde{\bs x}_1$ is the portion of the reconstructed image in the RoI.

We now assume that $\bs A \in {\Re ^{m \times N}}$ is a matrix whose
entries are identically and independently distributed (\emph{i.i.d.})
as ${\cl N}\left( 0,\frac{1}{m} \right)$~\cite{CRT2005,
  candes2006}. Hereafter, we denote by $\bs {\rm Id}_d$ the $d\times d$
identity matrix for $d\in \mathbb N$.

\begin{proposition}
If the entries of $\bs A$ are \emph{i.i.d.} as $\cl N(0,1/m)$ and if $\bs E^T \bs E = \bs {\rm Id}_{N_L}$, then the entries of $\bs A_E$ are also \emph{i.i.d.} as $\cl N(0,1/m)$.\sq
\end{proposition}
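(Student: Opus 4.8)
The plan is to show that $\bs A_E = [\bs A_1, \bs A_2 \bs E]$ has Gaussian i.i.d. entries by analyzing its two column blocks. The block $\bs A_1$ is simply a submatrix of $\bs A$, so its entries are already i.i.d.\ $\cl N(0,1/m)$ — nothing to prove there. The content is entirely in the block $\bs A_2 \bs E \in \Re^{m\times N_L}$: I must show that its entries are i.i.d.\ $\cl N(0,1/m)$, \emph{and} that they are independent of the entries of $\bs A_1$. The key structural fact is that each row of $\bs A$ is an independent Gaussian vector, so it suffices to work row by row.

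First I would fix a row index $i$ and write $\bs a^T$ for the corresponding row of $\bs A_2$, a vector in $\Re^{N_2}$ with i.i.d.\ $\cl N(0,1/m)$ entries; the $i$-th row of $\bs A_2\bs E$ is then $\bs a^T \bs E = (\bs E^T \bs a)^T$. Since $\bs a \sim \cl N(0, \tfrac1m \bs{\rm Id}_{N_2})$ is a (jointly) Gaussian vector, the linear image $\bs E^T \bs a$ is again jointly Gaussian with mean zero and covariance $\tfrac1m \bs E^T \bs E = \tfrac1m \bs{\rm Id}_{N_L}$, using the hypothesis $\bs E^T\bs E = \bs{\rm Id}_{N_L}$. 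Hence the $N_L$ entries in this row of $\bs A_2\bs E$ are jointly Gaussian, each $\cl N(0,1/m)$, and pairwise uncorrelated, therefore mutually independent. Next, because distinct rows of $\bs A$ are independent, the transformed rows of $\bs A_2\bs E$ are independent across $i$ as well, so all $m N_L$ entries of $\bs A_2\bs E$ are i.i.d.\ $\cl N(0,1/m)$.

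It remains to argue independence between the two blocks. Within a single row $i$, the relevant objects are the sub-row of $\bs A_1$ and the vector $\bs E^T \bs a$, where $\bs a$ is the sub-row of $\bs A_2$; these are jointly Gaussian (being linear functions of the Gaussian row of $\bs A$) and their cross-covariance vanishes because the $\bs A_1$ and $\bs A_2$ portions of a row of $\bs A$ are independent. Uncorrelated and jointly Gaussian gives independence of the two blocks within the row, and independence across rows then gives independence of the full blocks $\bs A_1$ and $\bs A_2\bs E$. Combining the three facts — $\bs A_1$ i.i.d., $\bs A_2\bs E$ i.i.d., and the two independent — yields that all entries of $\bs A_E$ are i.i.d.\ $\cl N(0,1/m)$.

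The only real subtlety — the ``main obstacle,'' such as it is — is remembering that zero covariance implies independence \emph{only} in the jointly Gaussian setting, and being careful that the relevant vectors are indeed jointly Gaussian (they are, as linear images of a single Gaussian vector). Everything else is a routine covariance computation driven by the identity $\bs E^T\bs E = \bs{\rm Id}_{N_L}$; note in particular that this hypothesis is exactly what is needed and that without it $\bs A_2\bs E$ would have row-covariance $\tfrac1m\bs E^T\bs E \neq \tfrac1m\bs{\rm Id}$, breaking both the unit-variance and the within-row independence claims.
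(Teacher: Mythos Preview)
Your argument is correct and follows essentially the same route as the paper: both reduce the claim to a covariance computation, using $\bs E^T\bs E=\bs{\rm Id}_{N_L}$ to get $\mathrm{Cov}(\bs E^T\bs a)=\tfrac1m\bs{\rm Id}_{N_L}$ for each row $\bs a$ of $\bs A_2$, and then invoke ``jointly Gaussian and uncorrelated implies independent.'' If anything, you are slightly more explicit than the paper about the joint-Gaussianity hypothesis and about the cross-block independence with $\bs A_1$.
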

\begin{proof}
According to the decomposition \eqref{eq:AE}, we have trivially $(\bs
A_1)_{ij} \sim_{\rm iid} \cl N(0,1/m)$. Moreover, the entries of $\bs A_2' = \bs A_2 \bs E$ have
zero expectation and are clearly independent of those of $\bs
A_1$. Let us denote by $\bs a^T_i$ and $\bs b_j$ the
$i^{\rm th}$ row of $\bs A_2$ and the $j^{\rm th}$ column of $\bs E$,
respectively. Then, using the Kronecker symbol $\delta_{ij}$ equal to 1 if
$i=j$ and 0 otherwise, using $\bs b^T_i \bs b_j = (\bs E^T \bs E)_{ij}
= \delta_{ij}$, 
we find $\mathbb E (\bs A_2')_{ij} (\bs A_2')_{kl} = \mathbb E (\bs b^T_j \bs
a_i)(\bs a^T_k \bs b_l) = \bs b^T_j\,(\mathbb E \bs
a_i \bs a^T_k)\,\bs b_l =  m^{-1}\delta_{ik}\,\bs b_j^T \bs{\rm Id}_{N_2}\bs
b_l = m^{-1} \delta_{ik}\delta_{jl}$, which proves the independence of
$\bs A'_2$'s Gaussian random entries and concludes the proof.\sq  
\end{proof}

There are many situations where $\bs E^T \bs E = \bs {\rm Id}_{N_L}$. For
instance, let us consider that the region outside of the RoI is
compressible in the first $N_L$ elements of an orthonormal basis $\bs W_2 \in {\Re ^{N_2 \times
    N_2}}$, \eg in the first $N_L$ frequencies of a Discrete Cosine
Transform (DCT). This means that the vector $\bs x_2$ can be written as $\bs x_2 = \bs
W_2 \bs \alpha_2 $, with $\bs \alpha_2 \in \Re^{N_2}$ concentrated on
the first $N_L$ components. 

Under this assumption, the expansion matrix $\bs E$ can be formulated
as $\bs E = \bs W_2 \bs S_L^T \in \Re^{N_2 \times N_L}$, where $\bs
S_L \in \Re^{N_L \times N_2}$ is the operator selecting the $N_L$
lower frequency coefficients. It is then easy to check that $\bs E^T
\bs E = \bs {\rm Id}_{N_L}$ knowing that $\bs S_L \bs S^T_L = \bs {\rm Id}_{L}$ and $\bs W_2 \bs W^T_2=\bs W^T_2 \bs W_2 = \bs {\rm Id}_{N_2}$. 

Therefore, if $\bs A_E$ is generated as in \eqref{eq:AE} from a Gaussian random matrix
$\bs A$ and if $\bs E^T
\bs E = \bs {\rm Id}_{N_L}$, $\bs A_E$ admits robust
$k{\textrm{-sparse}}$ solutions with overwhelming probability
\cite{CRT2005,candes2006} if
\sqm 
\begin{equation}
m \geq m'(k,N'):= C' \cdot k \cdot \log(N'), ~~ N'=N_1+N_L . \sqm
\label{eq:RIPL}
\end{equation}
Since $N=N_1+N_2>N'=N_1+N_L$, a comparison of~\eqref{eq:RIP} and \eqref{eq:RIPL} shows that, asymptotically, $m'(k,N')<m^*(k,N)$. This implies that, for a given number of measurements $m$, the condition \eqref{eq:RIPL} may be satisfied while the condition~\eqref{eq:RIP} may not. 

Let us consider the case where $m$ is such that 
\sqm
\begin{equation}
m'(k,N') \leq m < m^*(k,N). \sqm
\label{eq:mcond}
\end{equation} 
Then, the result of Prop.~\ref{th:th2} holds and, in particular, the
error bound \eqref{eq:15} holds with overwhelming probability. In the
special case when $\bs E$ can be found to satisfy
\sqm
\begin{equation}
 \bs E \bs x^*_L =\bs x_2^*,\sqm
\label{eq:exact}
\end{equation}
Eq.~\eqref{eq:15} implies $\widetilde {\bs x}_1= \bs x^*_1$, \ie the multi-resolution reconstruction \eqref{eq:min4} produces the exact solution in the RoI. On the other hand, since~\eqref{eq:RIP} is not satisfied under the assumption of~\eqref{eq:mcond}, there is nothing that can be said about the solution of the single resolution reconstruction \eqref{eq:min1}. Hence, the image reconstructed using~\eqref{eq:min1} may be quite different from the original image $\bs x^*$. This is, therefore, the theoretical basis for the statement that the multi-resolution reconstruction has a higher quality in the RoI because the total number of unknowns is reduced. 

In general, we may not be able to find an expansion matrix $\bs E$ to satisfy \eqref{eq:exact} and, therefore, the term on right hand side of \eqref{eq:15} may not be zero. Consequently, the multi-resolution reconstruction may contain an error introduced by a potential mismatch in the mapping from the lower resolution to the higher resolution outside of the RoI. However, if the image in the region outside of the RoI is smooth, an expansion matrix can be found so that the right hand side of \eqref{eq:15} is very small, resulting in a high quality image in the RoI.

\sq
\section{Simulation}
\label{sec:simulations}
\sq

In this section, the multi-resolution approach~\eqref{eq:min4} is compared to the traditional single-resolution approach~\eqref{eq:min1} using compressive measurements. For this analysis, we use the image from Fig.~\ref{fig01}-(left), which is defined on a $128 \times 256$ pixel grid ($N = 32768$). The RoI is defined as the region containing the butterfly, which corresponds to the right half of the image, \ie $N_1 = N_2 = N/2$. 
Outside of the RoI, the pixel grid has a lower resolution defined through the parameter $L$ such that $N_L = N_2/L^2$. That is, the number of pixels outside of RoI is reduced by 4 and 16 times for $L=2$ and $L=4$, respectively. 

In the experiments, the sensing matrix $\bs A$ corresponds to a
Gaussian matrix with zero mean and variance $1/m$. The expansion
matrix $\bs E \in \mathbb R^{N\times N_L}$ corresponds to an interpolation matrix. Since the image in Fig.~\ref{fig01}-(left) is not sparse but compressible, the reconstructions in~\eqref{eq:min1} and~\eqref{eq:min4} were solved using an \emph{analysis} formulation with a redundant Haar wavelet basis as the sparsifying basis. We remark that this formulation does not exactly match the reconstruction problems in~\eqref{eq:min1} and~\eqref{eq:min4}, however, it allows illustrating the performance of both single and multi-resolution approaches on a compressible image and providing some intuition on the theoretical analysis presented in Section~\ref{sec:MR}.
The resulting minimization problems are solved in their constrained form using $\varepsilon = 10^{-10}$. The constraints are handled through the convex indicator functions on the sets \mbox{$\cl C_1 =~\{ \bs u \in \Re^N : \| \bs y - \bs A \bs u \|_2 \leq \varepsilon \}$} and \mbox{$\cl C_2 = \{ \sm{\bs u_1}{\bs u_L} \in~\Re^{N'} : \| \bs y - \bs A_E \sm{\bs u_1}{\bs u_L} \|_2 \leq \varepsilon \}$}, respectively. The resulting problems are solved using the algorithm proposed by Chambolle and Pock~\cite{CP2011}. 

The single and multi-resolution approaches are compared using the Reconstruction Signal-to-Noise Ratio (RSNR), defined as $\textrm{RSNR} = 20 \log_{10} \| \bs x^* \|_2/\| \widetilde{\bs x} - \bs x^* \|_2,$ where $\bs x^*$ is the original image and $\widetilde{\bs x}$ is the solution from \eqref{eq:min1} or \eqref{eq:min4}. We report the RSNR calculated over the entire image and over the RoI. 

In the following, we show a reconstruction example for $20\%$ of the measurements, \ie $m = 0.2N$. Fig.~\ref{fig:SR1} depicts the reconstruction results using the single-resolution approach~\eqref{eq:min1}, and Fig.~\ref{fig:MR1_q2} and Fig.~\ref{fig:MR1_q4} show the reconstruction results using the multi-resolution approach~\eqref{eq:min4} for $L=2$ and $L=4$, respectively. 

The results demonstrate the advantage of the multi-resolution approach~\eqref{eq:min4} over the single-resolution one \eqref{eq:min1} for reconstructing compressible images from highly compressive measurements. Indeed, comparisons of Fig.~\ref{fig:SR1} with Figs.~\ref{fig:MR1_q2} and \ref{fig:MR1_q4} show that the RSNR from multi-resolution reconstruction~\eqref{eq:min4} (either $L=2$ or $L=4$) is more than 6 dB better than the single resolution reconstruction~\eqref{eq:min1}. 

Note that in all the results, the RSNR calculated over the entire image (on the left hand side of Figs.~\ref{fig:SR1}-\ref{fig:MR1_q4}) is higher than the RSNR calculated over the RoI (on the left hand side of Figs.~\ref{fig:SR1}-\ref{fig:MR1_q4}) because within each reconstructed image, the region outside of the RoI has better accuracy than the RoI, which is more difficult to reconstruct. 

It can been observed that the RSNR improvement from $L=2$ to $L=4$ is very small. This observation can be well explained by the result of Prop.~\ref{th:th2}. On one hand, as $L$ increases from $2$ to $4$, condition \eqref{eq:RIPL} is more easily met, and therefore, the quality of the reconstructed image increases. On the other hand, as $L$ increases, the interpolation made by the expansion matrix $\bs E$ becomes more inaccurate, making the term on the right hand side of~\eqref{eq:15} larger and, therefore, the error in the reconstruction increases too. It is likely that, because of this trade-off between the effect of reducing the number of unknowns and the effect of increasing the error in the constraint, the RSNR results from $L=2$ and $L=4$ are very similar.  

\begin{figure}[!t]
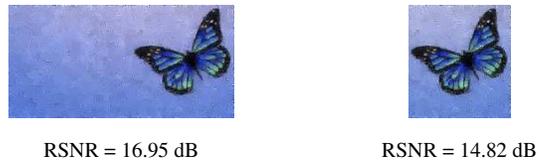

\centering
\footnotesize
\begin{tabular}{c c}
	\vspace{-0.5cm} & \\
	\includegraphics[height=2.5cm]{./figures/Fig4_Estim_SR1_T1_x} &
	\includegraphics[height=2.5cm]{./figures/Fig5_Estim_SR1_T1_x1} \\
	\vspace{-0.8cm} & \\
	RSNR = 16.95 dB & RSNR = 14.82 dB
\end{tabular}
  \caption{Reconstruction results using the single-resolution approach~\eqref{eq:min1} for $m = 0.2N$. Left: Reconstructed image $\widetilde{\bs x}$. Right: Reconstructed RoI $\widetilde{\bs x}_1$. \\[-.5cm]}
 \label{fig:SR1}
\end{figure}
\begin{figure}[!t]
\centering
\footnotesize
\begin{tabular}{c c}
	\vspace{-0.5cm} & \\
	\includegraphics[height=2.5cm]{./figures/Fig6_Estim_MR1_T1_q2_x} &
	\includegraphics[height=2.5cm]{./figures/Fig7_Estim_MR1_T1_q2_x1} \\
	\vspace{-0.8cm} & \\
	RSNR = 23.23 dB & RSNR = 21.28 dB
\end{tabular}
  \caption{Reconstruction results using the multi-resolution approach~\eqref{eq:min4} for $m = 0.2N$ and $L = 2$. Left: Reconstructed image $\widetilde{\bs x}$. Right: Reconstructed RoI $\widetilde{\bs x}_1$. \\[-.5cm]}
 \label{fig:MR1_q2}
\end{figure}
\begin{figure}[!t]
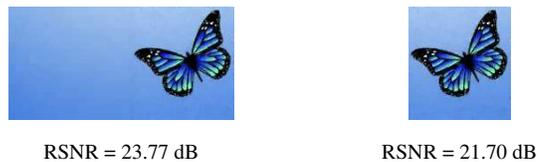

\centering
\footnotesize
\begin{tabular}{c c}
	\vspace{-0.5cm} & \\
	\includegraphics[height=2.5cm]{./figures/Fig8_Estim_MR1_T1_q4_x} &
	\includegraphics[height=2.5cm]{./figures/Fig9_Estim_MR1_T1_q4_x1} \\
	\vspace{-0.8cm} & \\
	RSNR = 23.77 dB & RSNR = 21.70 dB
\end{tabular}
  \caption{Reconstruction results using the multi-resolution approach~\eqref{eq:min4} for $m = 0.2N$ and $L = 4$. Left: Reconstructed image $\widetilde{\bs x}$. Right: Reconstructed RoI $\widetilde{\bs x}_1$. \\[-1cm]}
 \label{fig:MR1_q4}
\end{figure}

\sq
\sq
\section{Conclusion}
\label{sec:conc}
\sq
We have provided a theoretical analysis supported by simulation results to show that the multi-resolution approach is effective to reconstruct an image with high quality in a region of interest. Although the analysis here is performed for sparse signals, similar conclusions can be reached for compressible signals and we plan to present the analysis and more simulation results in a future longer version of this paper.


\bibliographystyle{IEEE}
\bibliography{bibfile}

\end{document}